
\typeout{IJCAI--24 Instructions for Authors}


\documentclass{article}
\pdfpagewidth=8.5in
\pdfpageheight=11in

\usepackage{ijcai24}

\usepackage{times}
\usepackage{soul}
\usepackage{url, natbib}
\usepackage[hidelinks]{hyperref}
\usepackage[utf8]{inputenc}
\usepackage[small]{caption}
\usepackage{graphicx, bm}
\usepackage{amsmath, amssymb}
\usepackage{amsthm}
\usepackage{booktabs}
\usepackage{algorithm}
\usepackage{algorithmic, enumitem}
\usepackage[switch]{lineno}


\urlstyle{same}



\newtheorem{theorem}{Theorem}[section]

\theoremstyle{plain}

\theoremstyle{definition}

\newtheorem{problem}[theorem]{Problem}
\newtheorem{remark}[theorem]{Remark}





\pdfinfo{
/TemplateVersion (IJCAI.2024.0)
}

\title{A View on Out-of-Distribution Identification from\\ a Statistical Testing Theory Perspective}

\author{
    Author Name
    \affiliations
    Affiliation
    \emails
    email@example.com
}

\author{
Alberto Caron$^1$
\and
Chris Hicks$^1$\and
Vasilios Mavroudis$^1$
\affiliations
$^1$The Alan Turing Institute\\
\emails 
\{acaron, c.hicks, vmavroudis\}@turing.ac.uk,
}

\begin{document}

\maketitle

\begin{abstract}
    We study the problem of efficiently detecting Out-of-Distribution (OOD) samples at test time in supervised and unsupervised learning contexts. While ML models are typically trained under the assumption that training and test data stem from the same distribution, this is often not the case in realistic settings, thus reliably detecting distribution shifts is crucial at deployment. We re-formulate the OOD problem under the lenses of statistical testing and then discuss conditions that render the OOD problem identifiable in statistical terms. Building on this framework, we study convergence guarantees of an OOD test based on the Wasserstein distance, and provide a simple empirical evaluation.
\end{abstract}

\section{Introduction}

Supervised and unsupervised learning models traditionally operate under the assumption that the distributions of data observed during training and testing are the same \citep{murphy2012machine}. However, this ideal scenario rarely holds true in real-world applications, where data distributions can shift. OOD detection generally refers to the task of identifying instances that lie outside the distribution of data seen during training \citep{yang2021generalized, liu2021towards_survey}. It serves as a crucial component towards safe and trustworthy AI when deploying machine learning models in real-world scenarios such as self-driving cars (e.g., \citep{ji2021poltergeist}), medical diagnostics and cybersecurity \citep{hendrycks2021unsolved}. Failure to accurately detect OOD samples in these settings can lead to severe consequences.


There is a very rich literature on methods to detect OOD. Some highly influential contributions include: \cite{hendrycks2016baseline, liang2018enhancing, sun2021react}, that propose a softmax score for neural networks; \cite{ren2019likelihood} that advocates for likelihood ratios based scores; \cite{liu2020energy} that instead utilize energy-based scores; and \cite{huang2021importance} that uses gradient information of the KL divergence. Theoretical studies on the problem of OOD detection are limited. Recent works include \cite{ye2021towards, fort2021exploring} and \cite{morteza2022provable}. In particular, \cite{zhang2021understanding} analyzes the causes of failures in OOD detection with generative models, while \cite{fang2022out} derive conditions for learnability of the OOD problem under a PAC-learnability framework.

In this work, we adopt a statistical testing approach to the problem \citep{lehmann1986testing}. While this perspective is not entirely new \citep{zhang2021understanding, haroush2022a}, we progress a step further and derive theoretical guarantees under a non-parametric statistical testing framework based on the Wasserstein distance \citep{vallender1974calculation, panaretos2019statistical}. These results help understand the conditions for identifiability of the OOD classification problem.

\section{Problem Framework}

Suppose we are in a supervised learning setting and we observe i.i.d.~data $\mathcal{D}_n = \{(X_i, Y_i)\}^n_{i=1}$ at training time, stemming from the joint distribution $(X_i, Y_i) \sim P_{(X_{in}, Y_{in})} \in \mathcal{P}_{in}$ (In-Distribution, or ID). At test time, data might generate either from the same joint distribution as training $P_{(X_{in}, Y_{in})} \in \mathcal{P}_{in}$, or a fixed out-of-training (OOD) joint distribution $P_{(X_{out}, Y_{out})} \in \mathcal{P}_{out}$. Thus, at test time, data points are sampled from the mixture
\begin{equation*}
    P_{(X, Y)} = (1 - \delta) P_{(X_{in}, Y_{in})} + \delta  P_{(X_{out}, Y_{out})}~,
\end{equation*}
where $\delta \in [0, 1)$ indicates the prevalence of OOD samples. In this supervised learning context, a distribution shift in the joint $P_{(X, Y)} = P(Y|X) P(X) = P(X|Y) P(Y)$ can stem from either a shift in the marginal $P(X)$, also known as `covariate shift', or in the marginal $P(Y)$, known as `semantic shift', or in both at the same time \citep{yang2021generalized}. In unsupervised settings shifts can occur only in $P(X)$. In typical supervised learning settings however, at test time, we do not observe output labels $Y_i$. Yet, this is not always the case. For example in online learning we observe a stream of independent $(X_i,Y_i)$ pairs that we can use to update the model, thus we can possibly detect both anomalies in $P(X)$ and in $P(Y)$. Finally, in time series we observe a stream of temporally dependent $(Y_t, Y_{t+1},...)$ data points, where the objective is to detect a change point in time $t$ \citep{saatcci2010gaussian, aminikhanghahi2017survey, van2020evaluation}. Given these different types of data streaming, we define the generalized OOD detection problem as follows 
\begin{problem}[OOD Detection] \label{def:OOD}
    Given an ID distribution $P_{\mathcal{D}^{in}} \in \mathcal{P}_{in}$ and training data $\mathcal{D}^{in}_n \sim P_{\mathcal{D}^{in}}$, the aim is to construct a binary classifier $h^{out} \in \mathcal{H}^{out} \subset \{h : \mathcal{D} \rightarrow \{0,1\}\}$ such that
    \begin{equation*}
    h^{out} (\mathcal{D}_i) = 
        \begin{cases}
            0 \quad \text{ if } ~~ \mathcal{D}_i \sim P_{test} = P_{\mathcal{D}^{in}} \\
            1 \quad \text{ if } ~~ \mathcal{D}_i \sim P_{test} = P_{\mathcal{D}^{out}}
        \end{cases},
\end{equation*}
where $\mathcal{D}_i$ denotes the $i$-th entry of the test dataset.
\end{problem}
The analogy between the `zero-shot' OOD classification problem above and statistical tests has been drawn by recent works \citep{zhang2021understanding, haroush2022a}. In statistical testing theory \citep{lehmann1986testing}, the goal is to test a null versus an alternative hypothesis $H_0: \theta \in \Theta_0$ VS $H_1: \theta \in \Theta_1$, via a test indicator function $\phi_{\mathcal{R}} (T(z)) = \mathbb{I} [T(z) \in \mathcal{R}]$, where $z \in Z$ is a random variable, $T(z)$ is a test statistic and $\mathcal{R}$ an acceptance region. Also, we denote the power of a test, or True Positive Rate (TPR), as the probability of accepting $H_1$ when this is true: $\varrho (T) = p (T \in \mathcal{R} \mid \theta \in \Theta_1)$,. We can then draw the following equivalence between OOD detection and a `goodness-of-fit' tests:
\begin{remark}[OOD Test]
    Given the hypotheses $H_0: P_{test} = P_{\mathcal{D}^{in}} ~vs~ H_1: P_{test} \neq P_{\mathcal{D}^{in}}$, Problem \ref{def:OOD} is a statistical testing problem where $\phi_{\mathcal{R}} (\mathcal{D}_i) = \mathbb{I} [h^{out}(\mathcal{D}_i) > \lambda_{1-\alpha}]$, with test statistic $T(\cdot) = h^{out}(\cdot)$ and critical value $\lambda_{1-\alpha}$.
\end{remark}
The critical test value $\lambda_{1-\alpha}$ is usually cross-validated using the training data, and $\alpha$ indicates the type-I error probability of the test (typical values are $\{0.10, 0.05, 0.01\}$). The task is to design a test statistic $T(\cdot) = h^{out}(\cdot)$ that gives the best guarantees in terms of test power $\varrho (T) = p (T \in \mathcal{R} \mid P_{test} \neq P_{\mathcal{D}^{in}})$ (equivalently known as False Positive Rate).

\section{A Wasserstein Distance OOD Test}

The test statistic we consider is based on the Wasserstein distance. The $p$-Wasserstein distance is defined, given two probability measures $P$ and $Q$ on $\mathbb{R}^d$, as
\begin{equation*}
    W_p(P, Q)=\inf _{\gamma \in \Gamma(P,Q)} \Big( \mathbb{E}_{(Z,V) \sim \gamma} \| Z - V \|^p \Big)^{1 / p} ~ ,
\end{equation*}
where $\Gamma(P,Q)$ is the space of all couplings of $P$ and $Q$ (joint probability measures whose marginals are $P$ and $Q$). The Wasserstein distance, unlike other distributional divergences, possesses typical metric properties, such as symmetry and triangle inequality \citep{ramdas2017wasserstein, panaretos2019statistical}, that we need in order to derive the results in the next section. The Wasserstein OOD test is sketched out as follows. A model of the data generating distribution $P_{\theta}$ (e.g., regression, classification, generative, etc.), parametrized by $\theta \in \Theta$, is learnt using $\mathcal{D}_n \sim P_{train}$. Then, at test time, the test statistic for the hypotheses $H_0: D_i \sim P_{\theta}$ ~vs~ $H_1: D_i \sim Q \neq P_{\theta}$ is defined as $T^{wass}(\mathcal{D}_i) = W_p (P_{\theta}, Q)$, replacing $W_p (\cdot, \cdot)$ with its sample equivalent, and the critical test value $\lambda_{1-\alpha}$ is computed as the on the training data.

\subsection{OOD Identifiability and Test Power} \label{sec:OODident}

In this section we derive theoretical properties of the Wasserstein OOD test, and we use these to shed light on the identifiability, or learnability \citep{fang2022out}, of the OOD detection problem more in general. We start by demonstrating that, in the limit of OOD samples $ \rightarrow \infty$, the Wasserstein OOD test is uniformly consistent, i.e., its power (or True Positive Rate) $\rightarrow 1$, under some specific condition linking the ID $P_{\theta}$ and the OOD $Q$ distributions. Let $m$ be the OOD samples at test time, $\Delta_m$ a scalar depending only on $m$, and $Q_m$ the OOD distribution corresponding to a certain $m$, then we have:
\begin{theorem}[Uniform Consistency] \label{thm:consist}
    Let $\mathcal{D}_m$ be a test dataset. The test based on $T^{wass}_m = m^{1/2} W_p (P_{\theta}, Q)$ for hypotheses $H_0: D_m \sim P_{\theta}$ ~vs~ $H_1: D_m \sim Q \neq P_{\theta}$, is such that
    $$ \varrho (T^{wass}_m) = p(T^{wass}_m > \lambda_{1-\alpha} \mid Q \neq P_{\theta} ) \rightarrow 1 ~ , $$
    as $m \rightarrow \infty$, over alternatives $Q_m$ that satisfy $n^{1/2} W(P_{\theta}, Q_m) \geq \Delta_m$, where $\lim_{m \rightarrow \infty} \Delta_m = \infty$.
\end{theorem}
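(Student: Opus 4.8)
The plan is to bound the test statistic from below under the alternative while showing the critical value stays bounded under the null, and then combine the two by an elementary union‑bound argument. Throughout, write $\hat Q_m$ for the empirical measure of the $m$ test points, so that the realized statistic is $T^{wass}_m = m^{1/2} W_p(P_\theta,\hat Q_m)$ (here $P_\theta$ is treated as known; if it is estimated from the $n$ training points, an additional term $W_p(P_{\hat\theta_n},P_\theta)=O_p(n^{-1/2})$ is carried along throughout and absorbed at the end). \emph{Step 1 --- the critical value stays bounded.} Under $H_0$ the test points are i.i.d.\ from $P_\theta$, and a Wasserstein central limit theorem (del Barrio--Gin\'e--Matr\'an in dimension one, or its multivariate analogues, under a moment condition such as $\mathbb{E}_{P_\theta}\|Z\|^{2p+\varepsilon}<\infty$) yields that $m^{1/2}W_p(P_\theta,\hat Q_m)$ converges in distribution to a tight limit law $\mathcal{L}_0$. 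Since $\lambda_{1-\alpha}=\lambda_{1-\alpha}(m)$ is calibrated from the (resampled) null distribution of the statistic, it converges to the $(1-\alpha)$ quantile of $\mathcal{L}_0$, and in particular $\bar\lambda := \sup_m \lambda_{1-\alpha}(m) < \infty$.

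\emph{Step 2 --- lower bound under the alternative.} By symmetry of $W_p$ and the triangle inequality,
\[
T^{wass}_m = m^{1/2} W_p(P_\theta,\hat Q_m) \;\ge\; m^{1/2} W_p(P_\theta,Q_m) - m^{1/2} W_p(Q_m,\hat Q_m) \;\ge\; \Delta_m - m^{1/2} W_p(Q_m,\hat Q_m),
\]
where the last inequality uses the separation hypothesis $m^{1/2}W_p(P_\theta,Q_m)\ge\Delta_m$. The fluctuation term is controlled by the convergence of empirical measures in Wasserstein distance: under a \emph{uniform} moment bound $\sup_{Q\in\mathcal{P}_{out}}\mathbb{E}_Q\|Z\|^{2p+\varepsilon}\le M$, Fournier--Guillin-type rates give $\mathbb{E}\,W_p(Q_m,\hat Q_m)=O(m^{-1/2})$ in the parametric regime (and $O(m^{-1/d})$ otherwise, in which case $T^{wass}_m$ and $\Delta_m$ are understood to be rescaled by $m^{1/d}$), so that $m^{1/2}W_p(Q_m,\hat Q_m)=O_p(1)$ \emph{uniformly} over the admissible alternatives.

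\emph{Step 3 --- conclude.} Given $\varepsilon>0$, by Step 2 there is $C_\varepsilon<\infty$ with $\mathbb{P}\big(m^{1/2}W_p(Q_m,\hat Q_m)>C_\varepsilon\big)<\varepsilon$ for every $m$ and every admissible $Q_m$; combining with Step 1,
\[
\varrho(T^{wass}_m) = \mathbb{P}\big(T^{wass}_m > \lambda_{1-\alpha}\big) \;\ge\; \mathbb{P}\big(\Delta_m - C_\varepsilon > \bar\lambda\big) - \varepsilon \;\longrightarrow\; 1-\varepsilon
\]
since $\Delta_m\to\infty$. As $\varepsilon$ was arbitrary, $\varrho(T^{wass}_m)\to 1$ uniformly over $\{Q_m : m^{1/2}W_p(P_\theta,Q_m)\ge\Delta_m\}$, which is the claim. \emph{Main obstacle.} The crux is the uniformity in Step 2: the $O_p(1)$ bound on the empirical fluctuation must hold simultaneously across the whole family of alternatives, which is precisely what forces a uniform tail/moment condition on $\mathcal{P}_{out}$; the dimension dependence of the empirical-Wasserstein rate is the related subtlety and is the reason the separation is stated via a diverging sequence $\Delta_m$ rather than a fixed threshold. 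Step 1 is comparatively routine once a Wasserstein CLT is available, though it still needs the null moment condition and a measurable choice of the calibrated quantile.
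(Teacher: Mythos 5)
Your proof follows essentially the same route as the paper's: the triangle inequality gives $T^{wass}_m \ge \Delta_m - m^{1/2}W_p(Q_m,\hat Q_m)$, the empirical fluctuation term is argued to be tight (bounded in probability), and the conclusion follows because $\Delta_m$ diverges while the calibrated critical value stays bounded. The only difference is one of rigor, in your favor: you make explicit the uniform moment conditions and dimension-dependent empirical-Wasserstein rates (Fournier--Guillin) needed for the uniform $O_p(1)$ bound, and the null CLT behind the boundedness of $\lambda_{m,1-\alpha}$, both of which the paper's proof simply asserts via ``tight sequence'' and ``$\lambda_{m,1-\alpha}\rightarrow\lambda_{1-\alpha}$''.
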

Proof is provided in the appendix. The intuition behind Thm.~\ref{thm:consist} is that the test power (or TPR) of $T^{wass}_m$ is asymptotically optimal in the limit of $m \rightarrow \infty$ if the OOD $Q_m$ is far enough from the ID $P_{\theta}$. Thus, the OOD detection problem is `identifiable' when there is asymptotically no `overlap' between the ID and OOD distributions (note that this is valid for any test) \citep{ye2021towards, fang2022out}. Notice that while we prove consistency in terms of Wasserstein distance, the same result can in theory be obtained also for other distributional distances, for example Kolmogorov-Smirnov (KS) distance (but not the KL and JS divergence, which are not symmetric and do not satisfy the triangle inequality). However, as we will discuss later, Wasserstein distance also has practical implementation advantages; e.g., KS distance cannot easily be computed for multivariate distributions.

While uniform consistency holds in the limit of $m \rightarrow \infty$, we can also derive a non-asymptotic lower bound on $T^{wass}_m$ using concentration inequalities as follows. Notice that, contrary to Thm.~\ref{thm:consist}, the following results pertain uniquely to the Wasserstein distance case.
\begin{theorem}[Non-Asymptotic Lower Bound]  \label{thm:3.2}
    Let $\mathcal{D}_m$ be a test dataset. The test $T^{wass}_m = m^{1/2} W_p (P_{\theta}, Q)$ for hypotheses $H_0: D_m \sim P_{\theta}$ ~vs~ $H_1: D_m \sim Q \neq P_{\theta}$, is such that
    \begin{align*}
        \varrho (T^{wass}_m) & = p(T^{wass}_m > \lambda_{1-\alpha} \mid Q \neq P_{\theta} ) \geq \\
        & \geq 1 - \exp \left\{  - \gamma_p \frac{\phi'}{2} \big( \Delta_m - \lambda_{n, 1- \alpha} \big)^2  \right\} ~,
    \end{align*}
    if $m^{1/2} W_p (P_{\theta} , Q_m ) \geq \Delta_m$ and $\Delta_m \geq \lambda_{m, 1 - \alpha}$.
\end{theorem}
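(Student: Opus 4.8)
The plan is to combine the metric (triangle-inequality) structure of $W_p$ with a sub-Gaussian concentration bound for the empirical Wasserstein distance. First I would make the source of randomness explicit: as sketched above, at test time $W_p$ is evaluated at the empirical OOD measure $\widehat{Q}_m = \tfrac1m\sum_{i=1}^m \delta_{X_i}$ (conditioning on the training sample, so that the fitted $P_\theta$ and the critical value $\lambda_{m,1-\alpha}$ are held fixed), so under $H_1$ the statistic is the random variable $T^{wass}_m = m^{1/2} W_p(P_\theta, \widehat{Q}_m)$ and $\varrho(T^{wass}_m) = p\big(m^{1/2} W_p(P_\theta, \widehat{Q}_m) > \lambda_{m,1-\alpha}\big)$, the probability being over $\widehat{Q}_m \sim Q_m^{\otimes m}$.

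Next I would apply the reverse triangle inequality for $W_p$ --- available precisely because $W_p$ is a genuine metric, unlike KL or JS --- to write $W_p(P_\theta, \widehat{Q}_m) \ge W_p(P_\theta, Q_m) - W_p(Q_m, \widehat{Q}_m)$, and multiply through by $m^{1/2}$. The separation hypothesis $m^{1/2} W_p(P_\theta, Q_m) \ge \Delta_m$ then gives the pointwise lower bound $T^{wass}_m \ge \Delta_m - m^{1/2} W_p(Q_m, \widehat{Q}_m)$. Since $\Delta_m \ge \lambda_{m,1-\alpha}$, the event $\{ m^{1/2} W_p(Q_m, \widehat{Q}_m) < \Delta_m - \lambda_{m,1-\alpha} \}$ is contained in the rejection event, whence
\[
\varrho(T^{wass}_m) \;\ge\; 1 - p\!\left( W_p(Q_m, \widehat{Q}_m) \ge \frac{\Delta_m - \lambda_{m,1-\alpha}}{\sqrt{m}} \right).
\]

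Finally I would control the remaining tail with a concentration inequality for the empirical measure in $p$-Wasserstein distance (e.g.\ via a Talagrand-type $T_p$ transportation-cost inequality in the spirit of Bolley--Guillin--Villani, or the Fournier--Guillin deviation estimates), which under suitable moment/tail conditions on $Q_m$ yields a bound of the form $p\big( W_p(Q_m, \widehat{Q}_m) > t \big) \le \exp\{ -\gamma_p \tfrac{\phi'}{2}\, m\, t^2 \}$, with $\gamma_p$ depending on $p$ (and the ambient dimension) and $\phi'$ the associated distribution-dependent constant. Substituting $t = (\Delta_m - \lambda_{m,1-\alpha})/\sqrt{m}$ cancels the factor $m$ and yields exactly $\varrho(T^{wass}_m) \ge 1 - \exp\{ -\gamma_p \tfrac{\phi'}{2}(\Delta_m - \lambda_{m,1-\alpha})^2 \}$.

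The main obstacle is this last step: pinning down the concentration inequality with the advertised constants. The subtlety is that $W_p(Q_m, \widehat{Q}_m)$ concentrates about its mean $\mathbb{E}\, W_p(Q_m, \widehat{Q}_m)$, which in dimension $d \ge 3$ scales like $m^{-1/d}$ rather than $m^{-1/2}$; folding this into the clean exponential bound requires either the margin $(\Delta_m - \lambda_{m,1-\alpha})/\sqrt{m}$ to eventually dominate this mean --- a mild requirement once $\Delta_m \to \infty$, though it should be stated as a hypothesis --- or absorbing the mean term into $\gamma_p$ and $\phi'$. Relatedly, the Gaussian-type tail needs a standing tail/moment assumption on the family $\mathcal{P}_{out}$ (e.g.\ sub-Gaussianity, or a transportation-cost inequality), which I would make explicit; removing the conditioning on the training sample that fixes $P_\theta$ is then routine by the same reverse-triangle argument.
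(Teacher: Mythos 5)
Your proposal follows essentially the same route as the paper's own proof: the (reverse) triangle inequality for $W_p$ to get $T^{wass}_m \geq \Delta_m - m^{1/2}W_p(Q_m,\hat{Q}_m)$, the inclusion of events using $\Delta_m \geq \lambda_{m,1-\alpha}$, and then the Bolley--Guillin--Villani transportation-cost concentration bound with $t=(\Delta_m-\lambda_{m,1-\alpha})/\sqrt{m}$, which is exactly the cited Theorem of \cite{bolley2007quantitative} used in the appendix. Your closing caveats (the transportation/moment hypothesis on $Q_m$ and the sample-size condition needed for the Gaussian-type tail, i.e.\ $N \geq N_0\max(\epsilon^{-d'+2},1)$) are in fact the standing hypotheses of that concentration theorem, which the paper invokes without restating them in the theorem, so your version is if anything slightly more explicit about the same argument.
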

Here, $\gamma_p$ and $\phi'$ are defined according to the appendix. What happends then if condition $m^{1/2} W_p (P_{\theta} , Q_m ) \geq \Delta_m$ does not hold, i.e., when $P_{\theta}$ and $Q_m$ are very near (`near OOD')? We can derive a worst case upper bound of the distance based tests as follows.
\begin{theorem}[Worst Case Upper Bound] \label{thm:wcupper}
    Let $\mathcal{D}_m$ be a test dataset. The test $T^{wass}_m = m^{1/2} W_p (P_{\theta}, Q)$ for hypotheses $H_0: D_m \sim P_{\theta}$ ~vs~ $H_1: D_m \sim Q \neq P_{\theta}$, is such that
    $$ \sup \varrho (T^{wass}_m) = p(T^{wass}_m > \lambda_{1-\alpha} \mid Q \neq P_{\theta} ) \leq \alpha $$
    as $m \rightarrow \infty$, for alternatives $Q_m$ satisfying $m^{1/2} W_p (P_{\theta}, Q_m) \rightarrow 0$.
\end{theorem}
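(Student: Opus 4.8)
The plan is to show that under the shrinking (\emph{near-OOD}) alternatives $Q_m$ satisfying $m^{1/2}W_p(P_\theta,Q_m)\to 0$, the test statistic $T^{wass}_m$ has asymptotically the same distribution under $H_1$ as the null distribution used to calibrate the critical value, so its rejection probability cannot exceed the nominal level $\alpha$. The key structural tool is again the triangle inequality for $W_p$, which is exactly why the argument is tied to a genuine metric rather than a divergence.

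First I would write the test statistic as the scaled empirical Wasserstein distance $T^{wass}_m = m^{1/2}\widehat W_p(P_\theta,\hat Q_m)$, where $\hat Q_m$ is the empirical measure of the $m$ i.i.d.\ test points which, under $H_1$, are drawn from $Q_m$. By the triangle inequality,
\begin{equation*}
  m^{1/2}\widehat W_p(P_\theta,\hat Q_m)\;\le\; m^{1/2}W_p(P_\theta,Q_m)\;+\;m^{1/2}W_p(Q_m,\hat Q_m)\,.
\end{equation*}
The first term is $o(1)$ by hypothesis. For the second term I would invoke the same limit law for the scaled empirical Wasserstein distance that underlies Thms.~\ref{thm:consist}--\ref{thm:3.2}: $m^{1/2}W_p(Q_m,\hat Q_m)$ converges in distribution to the same limiting law $L$ as the null quantity $m^{1/2}W_p(P_\theta,\hat P_{\theta,m})$, because $Q_m\to P_\theta$ in $W_p$ and the limit depends continuously on the base measure; consequently the calibrated threshold $\lambda_{m,1-\alpha}$, which is the $(1-\alpha)$-quantile of the law of $m^{1/2}W_p(P_\theta,\hat P_{\theta,m})$, converges to $\lambda^\ast$, the $(1-\alpha)$-quantile of $L$.

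Combining these pieces: for arbitrary $\varepsilon>0$ and all large $m$,
\begin{equation*}
  \varrho(T^{wass}_m)\;\le\; p\!\left(m^{1/2}W_p(Q_m,\hat Q_m) > \lambda_{m,1-\alpha}-\varepsilon\right),
\end{equation*}
and taking $\limsup_{m\to\infty}$ and then letting $\varepsilon\downarrow 0$ (using continuity of the CDF of $L$ at $\lambda^\ast$) yields $\limsup_m \varrho(T^{wass}_m)\le p(L\ge\lambda^\ast)=\alpha$. Since the triangle-inequality bound holds uniformly, taking the supremum over all alternative sequences with $m^{1/2}W_p(P_\theta,Q_m)\to 0$ does not alter the conclusion.

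The main obstacle is the distributional limit for $m^{1/2}W_p(Q_m,\hat Q_m)$ \emph{along a drifting sequence} $Q_m$, together with the continuity of $L$ in the base distribution and the convergence of the thresholds $\lambda_{m,1-\alpha}$. In dimension one this is classical via the quantile-function representation of $W_p$ and Donsker's theorem; in higher dimension the $m^{1/2}$ scaling is generally not the correct normalization, so one must either restrict to regimes admitting a non-degenerate limit or replace the explicit limit by a stochastic-equicontinuity argument showing $\big|m^{1/2}\widehat W_p(P_\theta,\hat Q_m)-m^{1/2}W_p(P_\theta,\hat P_{\theta,m})\big|\xrightarrow{p}0$. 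I would isolate this as a regularity condition on the map $P\mapsto \mathrm{Law}\big(m^{1/2}W_p(P,\hat P_m)\big)$, consistent with the assumptions already used in the earlier theorems, and then the remaining steps are as above. (A contiguity/Le~Cam argument gives that the power stays bounded away from $0$ but does not by itself deliver the $\le\alpha$ upper bound, which is why the metric route is preferable here.)
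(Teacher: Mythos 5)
Your proposal is correct and rests on the same core idea as the paper's argument --- the triangle inequality for $W_p$ plus the observation that alternatives with $m^{1/2}W_p(P_\theta,Q_m)\to 0$ are asymptotically indistinguishable from the null, so the calibrated test cannot reject more often than its nominal level --- but your execution is noticeably more careful than what the paper actually writes. The paper's proof applies the triangle inequality and a CDF step that produce \emph{lower} bounds on $p(T^{wass}_m\ge\lambda_{m,1-\alpha})$ (the wrong direction for an upper bound on power), and then simply asserts that the condition $m^{1/2}W_p(P_\theta,Q_m)\to 0$ ``implies that in reality $Q=P_\theta$,'' so that the power converges to the type-I error $\alpha$; no distributional limit or threshold convergence is made explicit. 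You instead orient the bound correctly, $T^{wass}_m\le m^{1/2}W_p(P_\theta,Q_m)+m^{1/2}W_p(Q_m,\hat Q_m)$, absorb the first term as $o(1)$, and reduce the claim to (i) convergence in law of $m^{1/2}W_p(Q_m,\hat Q_m)$ along the drifting sequence to the null limit $L$ and (ii) convergence of $\lambda_{m,1-\alpha}$ to the $(1-\alpha)$-quantile of $L$ --- precisely the regularity conditions the paper leaves implicit. Your caveat about the $m^{1/2}$ scaling of the empirical Wasserstein distance in dimension $d>1$ (where this is generally not the right normalization, so one needs either a restricted regime or a stochastic-equicontinuity substitute) is a genuine gap in the paper's treatment as well, and isolating it as an explicit assumption is the right call; with that assumption granted, your argument delivers $\limsup_m\varrho(T^{wass}_m)\le\alpha$ cleanly, which is a tighter justification of the stated conclusion than the paper's own proof provides.
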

This means that the OOD detection problem is not identifiable for OOD distrubutions such that $m^{1/2} W_p (P_{\theta}, Q_m) \rightarrow 0$. In fact, no test can have high power in such cases where the OOD distribution is at a distance $o(m^{1/2})$ from $P_{\theta}$ \citep{lehmann1986testing}. Lastly, we derive an asymptotic upper bound for the `intermediate' case of $W(P_{\theta}, Q)$ not diverging nor converging to 0, but of distance converging to a constant $\delta$, such as the one depicted in the MNIST example in Section \ref{sec:MNIST}. This reads:
\begin{theorem}[Intermediate Case Asymptotic Upper Bound]  \label{thm:interbound}
    Let $\mathcal{D}_m$ be a test dataset. The test $T^{wass}_m = m^{1/2} W_p (P_{\theta}, Q)$ for hypotheses $H_0: D_m \sim P_{\theta}$ ~vs~ $H_1: D_m \sim Q \neq P_{\theta}$, is such that
    \begin{align*}
        \varrho (T^{wass}_m) & = p(T^{wass}_m > \lambda_{1-\alpha} \mid Q \neq P_{\theta} ) \leq \\
        & \leq \exp \left\{  - \gamma_p \frac{\phi'}{2} \big( \lambda_{n, 1- \alpha} - \delta \big)^2  \right\} ~ ,
    \end{align*}
    for alternatives $Q_m$ such that $m^{1/2} W_p (P_{\theta}, Q_m ) \rightarrow \delta < \lambda_{n, 1 - \alpha}$.
\end{theorem}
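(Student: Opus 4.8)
The plan is to run the argument behind Theorem~\ref{thm:3.2} in reverse: instead of using the triangle inequality to lower bound the empirical statistic by the true distance, I would use it to \emph{over}-estimate the statistic by the true (scaled) distance plus a sampling fluctuation, and then control the probability that the fluctuation alone is large enough to carry the statistic past the fixed threshold $\lambda_{n,1-\alpha}$.

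\textbf{Step 1 (reduction via the triangle inequality).} Write $\hat Q_m=\frac{1}{m}\sum_{j=1}^m\delta_{X_j}$ for the empirical measure of the $m$ OOD test points, so that $T^{wass}_m=m^{1/2}W_p(P_\theta,\hat Q_m)$. Since $W_p$ is a genuine metric, $W_p(P_\theta,\hat Q_m)\le W_p(P_\theta,Q_m)+W_p(Q_m,\hat Q_m)$, hence
\begin{equation*}
\{T^{wass}_m>\lambda_{1-\alpha}\}\subseteq\bigl\{\,m^{1/2}W_p(Q_m,\hat Q_m)>\lambda_{1-\alpha}-m^{1/2}W_p(P_\theta,Q_m)\,\bigr\}.
\end{equation*}

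\textbf{Step 2 (localizing the deterministic part).} By hypothesis $m^{1/2}W_p(P_\theta,Q_m)\to\delta<\lambda_{n,1-\alpha}$, so for every $\varepsilon\in(0,\lambda_{n,1-\alpha}-\delta)$ and all $m$ large enough the right-hand threshold is at least $\lambda_{n,1-\alpha}-\delta-\varepsilon>0$; it therefore suffices to bound $p\bigl(m^{1/2}W_p(Q_m,\hat Q_m)>\lambda_{n,1-\alpha}-\delta-\varepsilon\bigr)$ and then let $\varepsilon\downarrow0$. Next I would invoke the concentration inequality recalled in the appendix (of Bolley--Guillin--Villani type): under the transport-entropy inequality with constant $\gamma_p$ and the moment condition defining $\phi'$, the empirical distance obeys $p\bigl(W_p(Q_m,\hat Q_m)\ge\mathbb{E}\,W_p(Q_m,\hat Q_m)+t\bigr)\le\exp\{-\gamma_p\frac{\phi'}{2}\,m\,t^2\}$. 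Since $\mathbb{E}\,W_p(Q_m,\hat Q_m)$ vanishes faster than $m^{-1/2}$ under those conditions, taking $t=(\lambda_{n,1-\alpha}-\delta-\varepsilon)/m^{1/2}$ up to the vanishing bias cancels the factors of $m$ and gives $p\bigl(m^{1/2}W_p(Q_m,\hat Q_m)>\lambda_{n,1-\alpha}-\delta-\varepsilon\bigr)\le\exp\{-\gamma_p\frac{\phi'}{2}(\lambda_{n,1-\alpha}-\delta-\varepsilon)^2\}$. Combining with Step~1 and letting $\varepsilon\downarrow0$ yields the stated bound.

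The main obstacle is the last step. The bias $\mathbb{E}\,W_p(Q_m,\hat Q_m)$ of the plug-in Wasserstein distance decays at the $m^{-1/2}$ rate only in low effective dimension ($d<2p$); in general one must either restrict to that regime, pass to a sliced/projected variant, or let the constant $\phi'$ and the dimension hypotheses stated in the appendix absorb the correction so that the premise $m^{1/2}W_p(P_\theta,Q_m)\to\delta$ is actually compatible with a finite, non-degenerate fluctuation. A secondary subtlety is reconciling a finite-$m$ concentration bound with a purely asymptotic hypothesis, which forces some care about the order in which $m\to\infty$ and $\varepsilon\downarrow0$ are taken, and about treating $\lambda_{n,1-\alpha}$ as a fixed constant exceeding $\delta$ exactly as in the statement.
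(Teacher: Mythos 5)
Your argument is essentially the paper's own proof: the same triangle-inequality event inclusion $\{T^{wass}_m>\lambda\}\subseteq\{m^{1/2}W_p(Q_m,\hat Q_m)>\lambda-m^{1/2}W_p(P_\theta,Q_m)\}$ followed by the Bolley--Guillin--Villani concentration result restated in the appendix (Thm.~\ref{thm:bolley}), with the hypothesis $m^{1/2}W_p(P_\theta,Q_m)\to\delta$ plugged in at the end (your $\varepsilon$-bookkeeping for that limit is in fact more careful than the paper's). The only real difference is that the paper applies Thm.~\ref{thm:bolley} directly as a tail bound $p\bigl(W_p(Q_m,\hat Q_m)>\epsilon\bigr)\le\exp\{-\gamma_p\tfrac{\phi'}{2}m\epsilon^2\}$ with $\epsilon=m^{-1/2}(\lambda_{m,1-\alpha}-m^{1/2}W_p(P_\theta,Q_m))$, so your detour through $\mathbb{E}\,W_p(Q_m,\hat Q_m)$ and its decay rate is unnecessary --- though the dimension issue you flag does not vanish, it is merely hidden in the sample-size condition $m\ge N_0\max(\epsilon^{-d'+2},1)$ of Thm.~\ref{thm:bolley}, which the paper's proof leaves implicit.
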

In general, for the case where $m^{1/2} W_p (P_{\theta}, Q_m ) \rightarrow \delta$ and $\delta \in (0, \infty)$, the test power will converge to a limit strictly between $\alpha$ and $1$.

\subsection{Distributional Distance Tests}

The theoretical results in the previous section shed light on the identifiability of the OOD detection problem, by laying out the conditions that guarantee certain upper bounds or convergence in terms of test power, or TPR, $\rho (T^{wass}_m)$. Although these results pertain to the proposed Wasserstein distance OOD test, they can be in theory obtained also for other distributional distances (e.g., total variation distance, KS distance, etc.). In this section we explain why distributional distances are preferable, compared, e.g., to entropy and k-NN based tests \citep{ren2019likelihood} and detail the advantages of using Wasserstein distance specifically.

\vspace{0.2cm} \hspace{0.2cm} \emph{Q1: Why Distance Based OOD Tests?} \vspace{0.2cm}

\begin{figure}
    \centering
    \includegraphics[scale=0.35]{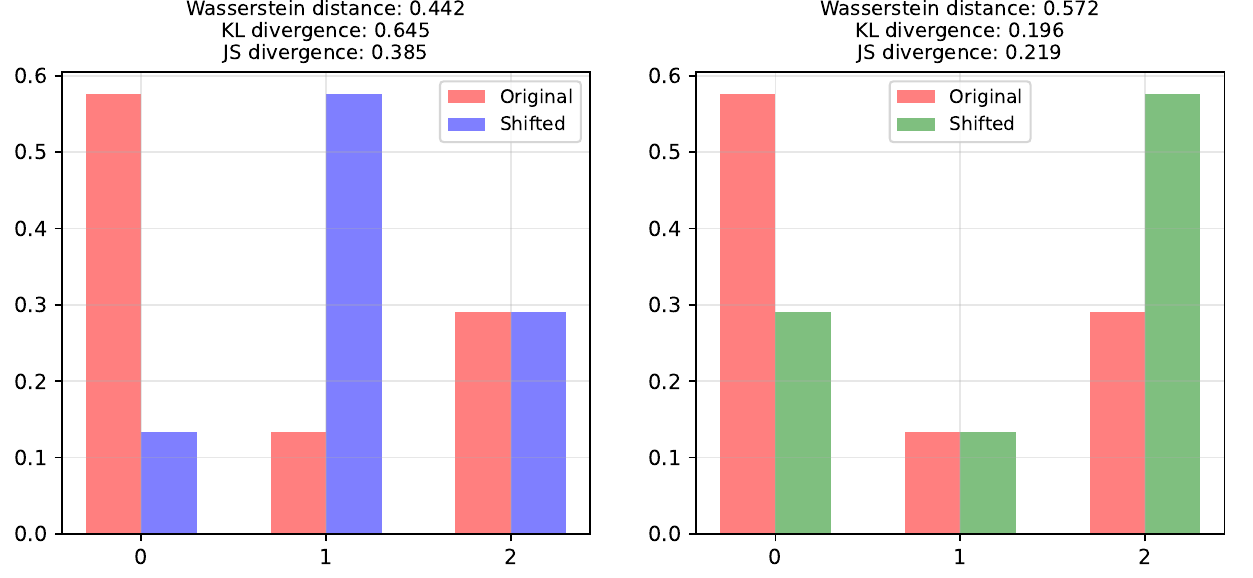}
    \caption{Examples of a discrete distribution shifts where KL and JS divergences offer a less informative measure, while $W(P,Q)$ is able to capture that the shift on the right is geometrically much further apart from the reference distribution than the one on the left.}
    \label{fig:discrete_shift}
\end{figure}

\noindent OOD metrics based on distributional distances guarantee that aleatoric (pure noise) components are removed from consideration. To prove this point, consider the Shannon entropy \citep{shannon1948mathematical} of a distribution $P_x$ over a random variable $X$, $\mathbb{H} [P_x] = - \mathbb{E}_{x \sim P_x} \log P (x)$, as a measure of Total Variation (TV) of $X$. TV can be decomposed w.r.t.~another random variable $Y \sim Q_y$ into the following \citep{cover1999elements}: 
\begin{equation*}
    \underbrace{\mathbb{H} [P_x]}_{\text{total variation}} = \underbrace{\mathbb{H} [P_x \mid Q_y]}_{\text{aleatoric variation}} + \underbrace{I [P_x; Q_y]}_{\text{epistemic variation}} .
\end{equation*}
The conditional entropy $\mathbb{H} [P_x \mid Q_y]$ is a natural measure of aleatoric uncertainty or pure noise, that does not bear any `semantic' information (e.g., an image background). Conversely, the mutual information $I [P_x; Q_y] = \mathbb{H} [P_x] - \mathbb{H} [P_x \mid Q_y] = d_{KL} (P_x, Q_y) $, where $d_{KL} (\cdot, \cdot)$ is the KL divergence \citep{lindley1956infogain}, is a measure of epistemic uncertainty only, as it represents how much information can be gained on $X$ by observing realizations of $Y$ (and viceversa), net of aleatoric components. In an OOD context this means intuitively that the smaller $I [P_x; Q_y]$, the more likely it is that we are observing data from the same distribution, $P_x = Q_y$. Finally, notice that there is a direct link between KL-divergence and Wasserstein distance (see Appendix \ref{sec:appen:KLwass}), such that both are infact different measures of mutual information $I[\cdot;\cdot]$.

\vspace{0.2cm} \hspace{0.2cm} \emph{Q2: Why Wasserstein Distance?} \vspace{0.2cm}

\begin{figure}
    \centering
    \includegraphics[scale=0.32]{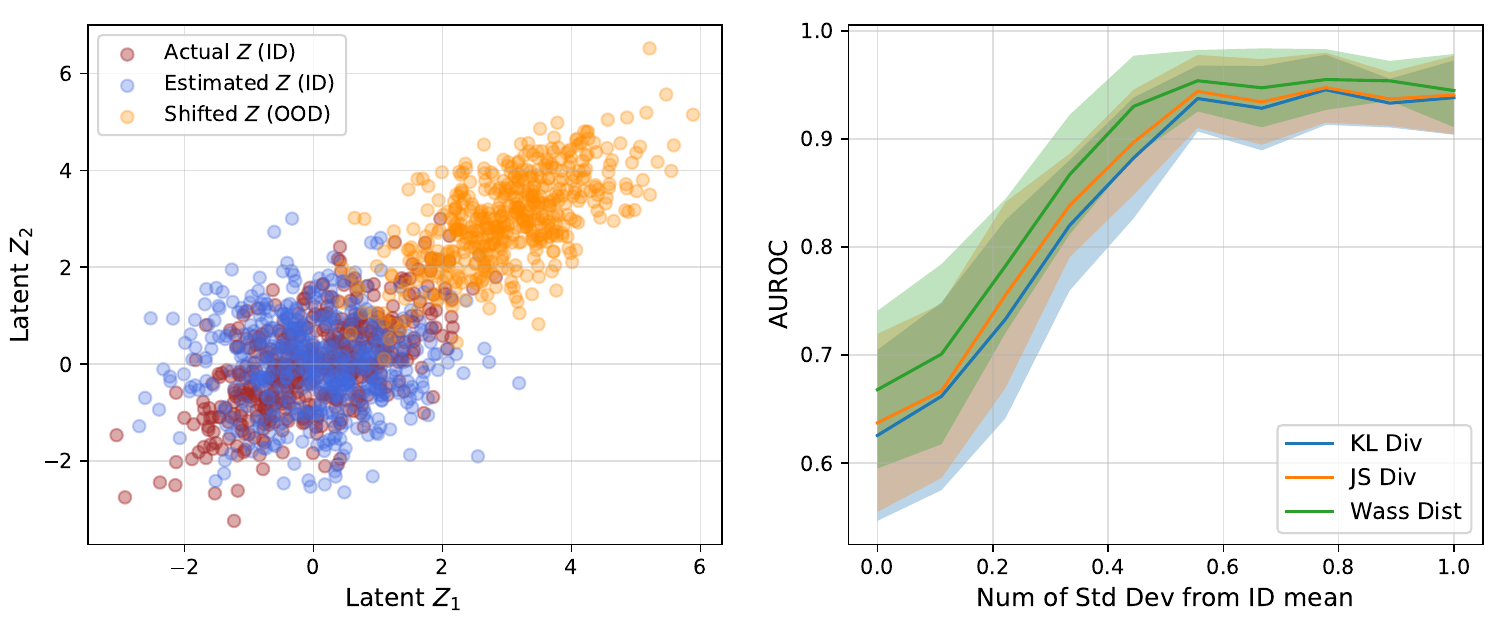}
    \caption{The plot on the left depicts the latent factors $(Z_1, Z_2)$ true distribution, learnt distribution via FL, and OOD one. The plot on the right reports the mean AUROC of each OOD tests (with 90\% error bands) for number of standard deviations from the ID mean.}
    \label{fig:FAex}
\end{figure}

Inspired by optimal transport theory, the Wasserstein distance has clear geometric interpretations, and it embeds information about the geometry of the support $\mathcal{X}$ \citep{panaretos2019statistical}; see Figure \ref{fig:discrete_shift} above for an example. Thus, compared to the popular KL and JS divergences, Wasserstein can provide meaningful and smooth representations of the distance between two distributions $P$ and $Q$, even when there is no overlap between them \citep{arjovsky2017wasserstein}; see example in Section 6.2 of \cite{weng2019gan}. Theoretical results in Section \ref{sec:OODident} thus are not derivable for the KL divergence case, as this is not symmetric and does not satisfy the triangle inequality. Compared to KS distance (used in KS tests) instead, Wasserstein distance can be easily applied to multivariate data, whereas the former requires multivariate empirical cumulative distribution functions, which are notoriously hard to compute \citep{justel1997multivariate, langrene2021fast}. Then finally, contrary to parametric families of distances (e.g., Mahalanobis distance, even though this is technically a distance between a point and a distribution), it does not necessarily need parametric assumptions, although these can simplify computations as we show in the section below.

\vspace{0.1cm}

\begin{figure*}
    \centering
    \includegraphics[scale=1.2]{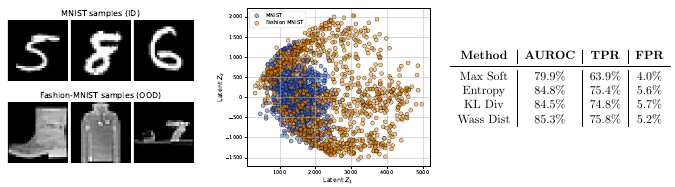}
    \caption{First plot on the left shows samples from MNIST (ID) and Fashion MNIST (OOD) datasets. Centre plot shows the distribution of the two principal latent factors, computed with Truncated SVD, on MNIST and Fashion MNIST. Table on the right reports results in terms of AUROC, TPR and FPR of the four OOD test considered.}
    \label{fig:MNIST}
\end{figure*}

Eventually, we stress that in the case of (multivariate) Gaussian distributions, the computation of distances mentioned above can be heavily simplified to closed form operation. In Appendix \ref{sec:simpli}, we include the simplified version of KL Divergence and 2-Wasserstein Distance for the case of $P$ and $Q$ being multivariate Gaussians.

\section{Experiments}

In this section we present results from two experiments: the first is a simple depiction of how test power changes depending on overlap, while the second implements some of the OOD tests on a image classification problem\footnote{Code is provided at \url{https://github.com/albicaron/OOD_test}.}.

\subsection{Generative Model Example} \label{sec:genmod}

Suppose we access a high-dimensional dataset at training time consisting of continuous $\mathcal{D}^n = \{\bm{x}_i\}^n_{i=1}$, where $\bm{X} \in \mathcal{X} = \mathbb{R}^d$. We set $n=500$ and $d=100$ as training sample and feature dimensions. We want to learn a simple generative model for $\{\bm{x}_i\}^n_{i=1}$, by assuming that these are generated by lower dimensional features $\{\bm{z}_i\}^n_{i=1}$ with $\bm{Z}_i \in \mathbb{R}^p$ where $p \ll d$. For visualization purposes, we pick $p=2$, generated randomly as a $\mathcal{N} (\bm{0}, \Sigma)$, where $\Sigma = [1.0 ~~ 0.6, ~ 0.6 ~~ 1.0]$. We learn $\{\bm{z}_i\}^n_{i=1}$ via a Factor Loading (FL) model \citep{gorsuch2014factor}, where $x_i = \mu + W z_i + \varepsilon$ and the latent factors $z_i$ are unobserved and assumed to be $\bm{h} \sim \mathcal{N} (\bm{0}, I)$, the noise is $\varepsilon \sim \mathcal{N} (0, \Psi)$. At test time we assume we sequentially receive independent sample batches of size $m$, $\{\bm{x}_i\}^m_{i=1}$, and we have to detect whether a batch OOD w.r.t.~the training sample (this could be the case, e.g., in quality assurance/control problems, continuous authentication in security \citep{eberz2017evaluating} or experimental design) through their latent features $\{\bm{z}_i\}^m_{i=1}$. 

We construct an OOD test to assess whether a test batch is OOD as follows: \textit{i)} split the data $\{\bm{x}_i\}^n_{i=1}$ into 80\%-20\% train-validation; \textit{ii)} fit a FL model on the 80\% train set and compute the estimated latent factors $\{\hat{\bm{z}}_i\}^n_{i=1}$; \textit{iii)} use the 20\% validation set to compute the estimated $\{\hat{\bm{z}}^m_i\}$; \textit{iv)} Compute the KL, JS and Wasserstein distances between the FL estimated $P^{train}_{\bm{z}^n | \bm{x}} = p (\bm{z}^n | \bm{x})$ and $P^{test}_{\bm{z}^m | \bm{x}} = p (\bm{z}^m | \bm{x})$ --- notice that these densities are both $p$-dimensional Gaussians by model assumption, so we can use the computational simplifications of Appendix \ref{sec:simpli}; \textit{v)} Repeat the previous steps for $K$ folds and compute the OOD test critical value $\lambda_{1- \alpha}$ as the $1-\alpha$ quantile of each distance $d(P^{train}_{\bm{z}^n | \bm{x}}, P^{test}_{\bm{z}^m | \bm{x}})$. At test time we observe a sequence of 100 independent sample batches, of which 50\% are ID, while 50\% are OOD and exhibit a shift in the latent features $\{\bm{z}_i\}^m_{i=1}$ away from the ID mean, and equal to $\texttt{linspace}(0.0, 1.0, 10.0)$ times the standard deviation of the true train ID features $\{\bm{z}_i\}^n_{i=1}$. 

We report in Figure \ref{sec:exper} the following quantities: the first plot on the left depicts the true latent features $(Z_1, Z_2)$, the estimated features via FL, and the features subject to an OOD distributional shift; the plot on the right instead reports the Area Under the Receiver Operating Characteristic (AUROC) curve, computed between the true positive rates (test power) and the false positive rates (probability of type-I error $\alpha)$ for each method, for increasing values of OOD mean shift (in terms of number of standard deviations). Notice that AUROC reaches the optimal 95\% level for all the OOD tests, as the OOD distribution more clearly separates from the ID one and overlap reduces. 

\subsection{MNIST Classification} \label{sec:MNIST}

The second experiment involves an image classification task. We consider the MNIST dataset at training time, and learn a CNN probabilistic classifier $p(y_i | x_i)$ on the labels in $\{(x_i, y_i)\}^n_{i=1}$. At test time, we receive 50\% of samples from MNIST (ID) and 50\% samples from Fashion MNIST, which is treated as the OOD dataset, and we have to correctly detect which ones are OOD. Samples from each dataset are depicted in the first plot on the left in Figure \ref{fig:MNIST}. In the middle plot of Figure \ref{fig:MNIST}, we report the two principal latent features of each dataset, learnt via Truncated SVD. As can be seen their distributions appear to have a good degree of overlap, which is not surprising as MNIST and Fashion MNIST images share common patterns. The OOD tests are constructed similarly to Section \ref{sec:genmod}, using the softmax distributions $p(y_i|x_i)$. We consider a standard `max-softmax' OOD detector \citep{hendrycks2016baseline, liang2018enhancing}, an entropy based detector (that computes the entropy of the softmax distribution, and a KL and Wasserstein distance OOD detectors. For the KL and Wasserstein detectors, the distance is computed between $p(y_i|x_i)$ and the uniform distribution as a reference. Results in terms of AUROC, TPR and FPR are reported in the table on the right of Figure \ref{fig:MNIST}, and show that the Wasserstein based OOD test slightly outperforms the others.

\section{Conclusion} \label{sec:exper}

In this short paper we present novel theoretical results that shed light on the identifiability of OOD detection \citep{fang2022out}. These results stem from recasting the problem as one of statistical testing and leverage the properties of the Wasserstein distance to derive asymptotic and non-asymptotic bounds on test power. We conclude with two simple experiments on generative modelling and classification.

\section*{Acknowledgments}

Research funded by the Defence Science and Technology Laboratory (Dstl) which is an executive agency of the UK Ministry of Defence providing world class expertise and delivering cutting-edge science and technology for the benefit of the nation and allies. The research supports the Autonomous Resilient Cyber Defence (ARCD) project within the Dstl Cyber Defence Enhancement programme.

\nocite{*}
\bibliographystyle{named}
\bibliography{ijcai24}

\clearpage

\appendix

\section{Proofs of Theorems}

In this first appendix section, we report the proofs for the theorem presented in the main body of the work. We make sure of the cleaner notation $W(P, Q)$ for a $p$-Wasserstein distance.

\begin{theorem}[Restatement of Theorem~\ref{thm:consist}]
    Let $\mathcal{D}_m$ be a test dataset. The test based on $T^{wass}_m = m^{1/2} W_p (P_{\theta}, Q)$ for hypotheses $H_0: D_m \sim P_{\theta}$ ~vs~ $H_1: D_m \sim Q \neq P_{\theta}$, is such that
    $$ \varrho (T^{wass}_m) = p(T^{wass}_m > \lambda_{1-\alpha} \mid Q \neq P_{\theta} ) \rightarrow 1 ~ , $$
    as $m \rightarrow \infty$, over alternatives $Q_m$ that satisfy $n^{1/2} W(P_{\theta}, Q_m) \geq \Delta_m$, where $\lim_{m \rightarrow \infty} \Delta_m = \infty$.
\end{theorem}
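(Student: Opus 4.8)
The plan is to combine the triangle inequality for $W_p$ with the stochastic boundedness of the empirical Wasserstein fluctuations under both hypotheses. Throughout I read the statistic as $T^{wass}_m = m^{1/2} W_p(P_\theta, \widehat Q_m)$, where $\widehat Q_m$ is the empirical measure of the $m$ test points, and let $\lambda_{m,1-\alpha}$ denote the $(1-\alpha)$-quantile of the law of $T^{wass}_m$ under $H_0$, i.e.\ of $m^{1/2} W_p(P_\theta, \widehat P_{\theta,m})$ (this is the cross-validated critical value of the test).

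\emph{Step 1 (the critical value stays bounded).} Under mild moment/tail conditions on $P_\theta$ one has $m^{1/2} W_p(P_\theta, \widehat P_{\theta,m}) = O_P(1)$: for $d=1$ this quantity converges weakly to a non-degenerate limit, and more generally one combines a transport-cost concentration inequality for $W_p(P_\theta,\widehat P_{\theta,m})$ around its mean with the rate bound $\mathbb{E}\,W_p(P_\theta,\widehat P_{\theta,m}) = O(m^{-1/2})$ (valid in low dimension). Consequently the quantiles are uniformly bounded, $\bar\lambda := \limsup_{m\to\infty} \lambda_{m,1-\alpha} < \infty$.

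\emph{Step 2 (lower-bounding the statistic under $H_1$).} If $D_m \sim Q_m$, symmetry and the triangle inequality of $W_p$ give
\[
W_p(P_\theta, \widehat Q_m) \;\geq\; W_p(P_\theta, Q_m) - W_p(Q_m, \widehat Q_m),
\]
hence, by the separation hypothesis $m^{1/2} W_p(P_\theta, Q_m) \geq \Delta_m$,
\[
T^{wass}_m \;\geq\; \Delta_m - m^{1/2} W_p(Q_m, \widehat Q_m).
\]
It then remains to control the fluctuation term uniformly over the admissible alternatives: for every $\epsilon>0$ there exists $M_\epsilon<\infty$ with $\sup_{Q_m} p\big(m^{1/2} W_p(Q_m, \widehat Q_m) > M_\epsilon\big) \leq \epsilon$ for all $m$. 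This follows from a concentration inequality for the empirical $W_p$ applied around $\mathbb{E}\,W_p(Q_m,\widehat Q_m)$, provided the $Q_m$ satisfy a common moment bound so that the concentration constants and the $O(m^{-1/2})$ mean bound are uniform in the class.

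\emph{Step 3 (conclusion).} Fix $\epsilon>0$. Since $\Delta_m\to\infty$ while $\lambda_{m,1-\alpha}$ is bounded, for all sufficiently large $m$ we have $\Delta_m - M_\epsilon > \bar\lambda + 1 \geq \lambda_{m,1-\alpha}$, and then Step 2 yields
\[
\varrho(T^{wass}_m) = p\big(T^{wass}_m > \lambda_{m,1-\alpha}\mid Q_m\big) \;\geq\; p\big(m^{1/2} W_p(Q_m, \widehat Q_m) \leq M_\epsilon\big) \;\geq\; 1-\epsilon .
\]
The threshold on $m$ is independent of the particular $Q_m$, so the power converges to $1$ uniformly over the admissible alternatives; letting $\epsilon\downarrow0$ gives the claim. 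I expect the main obstacle to be Step 2 in higher dimension, where $\mathbb{E}\,W_p(Q_m,\widehat Q_m) \asymp m^{-1/d}$ for $d>2p$ so that $m^{1/2} W_p(Q_m,\widehat Q_m)$ is only $O_P(1)$ when $d$ is small; covering general $d$ requires either a dimension-adapted normalisation of $T^{wass}_m$ or low intrinsic-dimension / smoothness assumptions, and in all cases the uniformity over $\{Q_m\}$ must be tracked carefully through the concentration constants.
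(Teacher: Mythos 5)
Your proposal follows essentially the same route as the paper's proof: the triangle inequality yields $T^{wass}_m \geq \Delta_m - m^{1/2}W_p(Q_m,\widehat Q_m)$, the fluctuation term $m^{1/2}W_p(Q_m,\widehat Q_m)$ is controlled as a tight sequence, and the conclusion follows since $\Delta_m \to \infty$ while the critical value $\lambda_{m,1-\alpha}$ stays bounded. If anything, your write-up is more careful than the paper's, which simply asserts tightness and the convergence of $\lambda_{m,1-\alpha}$ without the moment/dimension caveats and uniformity-in-$Q_m$ tracking that you rightly flag as the genuinely delicate points.
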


\begin{proof}
    Let $Q_m$ be a sequence of probabilities depending on $m$ satisfying $n^{1/2} W(P_{\theta}, Q_m) \geq \Delta_m$. Then, letting $\hat{Q}_m$ be an estimate of $Q_m$, by the triangle inequality of Wasserstein distance we have:
    $$ W(Q_m, P_{\theta}) \leq W(Q_m, \hat{Q}_m) + W(\hat{Q}_m , P_{\theta})  $$
    which implies:
    $$ T^{wass}_m = m^{1/2} W (P_{\theta}, Q) \geq \Delta_m - m^{1/2} W (Q_m, \hat{Q}_m) . $$
    Due to continuity of the cumulative distribution function relative to the random variable $m^{1/2} W (\cdot, \cdot)$, $F_W( \cdot)$, we can write $F_{m^{1/2} W}n (\lambda_{m, 1-\alpha}) \geq F_{\Delta - W} (\lambda_{1-\alpha})$, which can be written as
    { \small $$ p(T^{wass}_m \geq \lambda_{m, 1-\alpha}) \geq p(m^{1/2} W (Q_m, \hat{Q}_m) \leq \Delta_m - \lambda_{m, 1-\alpha}))$$}
    Now, we have that $m^{1/2} W (Q_m, \hat{Q}_m)$ is a tight sequence, i.e., $p(m^{1/2} W (Q_m, \hat{Q}_m) > M ) < \epsilon$ with $M > 0$, and $\epsilon > 0$ arbitrarily small. Thus we have that
    $$  p(m^{1/2} W (Q_m, \hat{Q}_m) \leq \Delta_m - \lambda_{m, 1-\alpha})) \rightarrow 1 $$
    implying $p(T^{wass}_m \geq \lambda_{m, 1-\alpha}) \rightarrow 1$ at the same time, since $\lim_{m \rightarrow \infty} \Delta_m = \infty$ and $\lambda_{m, 1-\alpha} \rightarrow \lambda_{1-\alpha}$, so that $\Delta_m - \lambda_{m, 1-\alpha} \rightarrow \infty$.
\end{proof}

We proceed with the proof of non-asymptotic lower bounds of Theorem 3.2.

\begin{theorem}[Restatement of Theorem~\ref{thm:3.2}]
    Let $\mathcal{D}_m$ be a test dataset. The test $T^{wass}_m = m^{1/2} W_p (P_{\theta}, Q)$ for hypotheses $H_0: D_m \sim P_{\theta}$ ~vs~ $H_1: D_m \sim Q \neq P_{\theta}$, is such that
    \begin{align*}
        \varrho (T^{wass}_m) & = p(T^{wass}_m > \lambda_{1-\alpha} \mid Q \neq P_{\theta} ) \geq \\
        & \geq 1 - \exp \left\{  - \gamma_p \frac{\phi'}{2} \big( \Delta_m - \lambda_{n, 1- \alpha} \big)^2  \right\} ~,
    \end{align*}
    if $m^{1/2} W_p (P_{\theta} , Q_m ) \geq \Delta_m$ and $\Delta_m \geq \lambda_{m, 1 - \alpha}$.
\end{theorem}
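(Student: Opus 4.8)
The plan is to follow the opening of the proof of Theorem~\ref{thm:consist} and then replace its asymptotic tightness argument with an explicit concentration (deviation) inequality for the empirical Wasserstein distance. First I would fix a sequence of alternatives $Q_m$ with $m^{1/2} W(P_\theta, Q_m) \geq \Delta_m$, let $\hat{Q}_m$ be the empirical estimate of $Q_m$ built from the $m$ test points, and apply the triangle inequality $W(P_\theta, Q_m) \leq W(Q_m, \hat{Q}_m) + W(\hat{Q}_m, P_\theta)$. Since $T^{wass}_m = m^{1/2} W(P_\theta, \hat{Q}_m)$, rearranging and using the hypothesis on $Q_m$ gives the bound $T^{wass}_m \geq \Delta_m - m^{1/2} W(Q_m, \hat{Q}_m)$, exactly as in the proof of Theorem~\ref{thm:consist}.

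Next, monotonicity of $p(\cdot)$ together with continuity of the distribution function of $m^{1/2} W(\cdot,\cdot)$ yields
\begin{align*}
    \varrho(T^{wass}_m) &= p\big(T^{wass}_m > \lambda_{m,1-\alpha} \mid Q \neq P_\theta\big) \\
    &\geq p\big(m^{1/2} W(Q_m, \hat{Q}_m) \leq \Delta_m - \lambda_{m,1-\alpha}\big) \\
    &= 1 - p\big(m^{1/2} W(Q_m, \hat{Q}_m) > \Delta_m - \lambda_{m,1-\alpha}\big).
\end{align*}
The hypothesis $\Delta_m \geq \lambda_{m,1-\alpha}$ is precisely what makes the deviation threshold $\Delta_m - \lambda_{m,1-\alpha}$ nonnegative, so that the residual event is a genuine upper-tail event to which a one-sided concentration bound can be applied.

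It then remains to bound $p\big(m^{1/2} W(Q_m, \hat{Q}_m) > t\big)$ at $t = \Delta_m - \lambda_{m,1-\alpha}$. Here I would invoke the sub-Gaussian-type concentration inequality for the empirical $p$-Wasserstein distance recalled in the appendix, of the form $p\big(W_p(Q_m, \hat{Q}_m) > s\big) \leq \exp\{-\gamma_p \phi'\, m s^2/2\}$ under the moment/transportation conditions imposed there, where $\gamma_p$ carries the dependence on the Wasserstein order and $\phi'$ the distribution-dependent concentration constant; after rescaling by $m^{1/2}$ this reads $p\big(m^{1/2} W(Q_m, \hat{Q}_m) > t\big) \leq \exp\{-\gamma_p \phi'\, t^2/2\}$. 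Substituting $t = \Delta_m - \lambda_{m,1-\alpha}$ and combining with the display above gives the claimed bound, with the harmless identification $\lambda_{m,1-\alpha} \to \lambda_{1-\alpha}$ (written $\lambda_{n,1-\alpha}$ in the statement). I expect this last step to be the main obstacle: one must cite or establish a concentration inequality with exactly the stated Gaussian tail, being careful about its regime of validity, since the classical deviation bounds for $W_p(Q_m,\hat{Q}_m)$ take the clean sub-Gaussian form only for thresholds above the mean rate $\mathbb{E}[W_p(Q_m,\hat{Q}_m)]$, which itself depends on the ambient dimension $d$ relative to $2p$. I would handle this either by restricting to the low-dimensional (or manifold-supported) regime where $m^{1/2} W_p(Q_m,\hat{Q}_m)$ stays tight, or by absorbing the mean term into $\lambda_{m,1-\alpha}$ and checking that $\Delta_m \geq \lambda_{m,1-\alpha}$ still leaves the bound meaningful.
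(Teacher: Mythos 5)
Your proposal follows essentially the same route as the paper's proof: triangle inequality to get $T^{wass}_m \geq \Delta_m - m^{1/2}W(Q_m,\hat{Q}_m)$, then lower-bounding the power by the complementary tail event, and closing with the Bolley--Guillin--Villani concentration inequality $p(W_p(Q_m,\hat{Q}_m) > \epsilon) \leq \exp\{-\gamma_p \tfrac{\phi'}{2} m \epsilon^2\}$ evaluated at $\epsilon = m^{-1/2}(\Delta_m - \lambda_{m,1-\alpha})$. Your closing caveat about the validity regime of that inequality (the sample-size condition $N \geq N_0\max(\epsilon^{-d'+2},1)$ and the dimension dependence) is a fair point that the paper's proof does not address explicitly, but it does not change the argument.
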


\begin{proof}
    Starting again from the triangle inequality property of Wasserstein distance as in the previous proof, we can obtain
    $$ T^{wass}_m = m^{1/2} W (P_{\theta}, Q) \geq \Delta_m - m^{1/2} W (Q_m, \hat{Q}_m) . $$
    Again, in the same way of the previous proof we can apply the cumulative distribution function $F_W( \cdot)$ and obtain 
    { \small $$ p(T^{wass}_m \geq \lambda_{m, 1-\alpha}) \geq p(m^{1/2} W (Q_m, \hat{Q}_m) \leq \Delta_m - \lambda_{m, 1-\alpha}))$$}
    Now, we make use of Wasserstein Concentration inequality theorem stated in \cite{bolley2007quantitative} that reads
    \begin{theorem}[\cite{bolley2007quantitative}] \label{thm:bolley}
        Let $p \in [1, 2]$ and let $P$ be a probability on $\mathbb{R}^d$ satisfying
        $$ W_p (P, Q) \leq \sqrt{\frac{2}{\phi} H (P | Q)}  = \sqrt{\frac{2}{\phi} \int \frac{dP}{dQ} \log \frac{dP}{dQ} dP} .$$
        For $d' > d$ and $\phi' < \phi$, there exists some constant $N_0$ such that for $N \geq N_0 \max (\epsilon^{-d' + 2} , 1)$ we have
        $$ p \big( W_p (P, \hat{P}_n \big) > \epsilon ) \leq \exp{ \left\{ - \gamma_p \frac{\phi'}{2} N \epsilon^2 \right\} } $$
        where:
        \begin{equation*}
        \gamma_p = 
            \begin{cases}
                1 \quad & \hspace{-0.25cm} \text{if } ~~ p \in [1,2) \\
                3 - 2\sqrt{2} \quad & \hspace{-0.25cm} \text{if } ~~ p=2
            \end{cases},
    \end{equation*}
    \end{theorem}

Applying the concentration inequality result of \cite{bolley2007quantitative} above to our can we can derive the following
\begin{align*}
    \hspace{-0.2cm} p(T^{wass}_m & \geq \lambda_{m, 1-\alpha}) \geq 
    \\
    & \geq p(m^{1/2} W (Q_m, \hat{Q}_m) \leq \Delta_m - \lambda_{m, 1-\alpha}))
    \\
    & \geq 1 - p(m^{1/2} W (Q_m, \hat{Q}_m) \geq \Delta_m - \lambda_{m, 1-\alpha}))
    \\
    & \geq 1 - p(W (Q_m, \hat{Q}_m) \geq m^{-1/2} (\Delta_m - \lambda_{m, 1-\alpha}))
    \\
    & \geq 1 - \exp{ \left\{ - \gamma_p \frac{\phi'}{2} m ( m^{-1/2} (\Delta_m - \lambda_{m, 1-\alpha}))^2 \right\} }
    \\
        & \geq 1 - \exp{ \left\{ - \gamma_p \frac{\phi'}{2} (\Delta_m - \lambda_{m, 1-\alpha})^2 \right\} } .
\end{align*}
\end{proof}
We proceed with the proof of Thm.~\ref{thm:wcupper} for the worst-case scenario where the two distributions tend to $m^{1/2} W_p (P_{\theta}, Q_m) \rightarrow 0$.

\begin{theorem}[Restatement of Theorem~\ref{thm:wcupper}]
    Let $\mathcal{D}_m$ be a test dataset. The test $T^{wass}_m = m^{1/2} W_p (P_{\theta}, Q)$ for hypotheses $H_0: D_m \sim P_{\theta}$ ~vs~ $H_1: D_m \sim Q \neq P_{\theta}$, is such that
    $$ \sup \varrho (T^{wass}_m) = p(T^{wass}_m > \lambda_{1-\alpha} \mid Q \neq P_{\theta} ) \leq \alpha $$
    as $m \rightarrow \infty$, for alternatives $Q_m$ satisfying $m^{1/2} W_p (P_{\theta}, Q_m) \rightarrow 0$.
\end{theorem}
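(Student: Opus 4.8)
The plan is to combine the triangle inequality for $W$ with the definition of the critical value and a contiguity argument: when $Q_m$ collapses onto $P_\theta$, the statistic $T^{wass}_m$ behaves, up to an $o(1)$ term, like the statistic one would obtain under the null, so its rejection probability cannot asymptotically exceed the nominal level $\alpha$. Concretely, under the alternative the test statistic is $T^{wass}_m = m^{1/2} W(P_\theta, \hat{Q}_m)$, with $\hat{Q}_m$ the empirical measure of an $m$-sample from $Q_m$. Applying the triangle inequality in the forward direction (rather than the reverse direction used in the proofs of Thm.~\ref{thm:consist} and Thm.~\ref{thm:3.2}) gives
\[ T^{wass}_m \;\le\; m^{1/2} W(P_\theta, Q_m) \,+\, m^{1/2} W(Q_m, \hat{Q}_m). \]
First I would dispatch the first summand: by hypothesis $m^{1/2} W(P_\theta, Q_m) \to 0$, so for $m$ large it is smaller than any fixed $\varepsilon > 0$. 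The second summand is the empirical-fluctuation term $m^{1/2} W(Q_m, \hat{Q}_m)$, which is exactly the object (with $Q_m$ in place of $P_\theta$) whose $1-\alpha$ quantile defines $\lambda_{m,1-\alpha}$ under $H_0$.

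The key step is then to argue that, since $Q_m \to P_\theta$ in $W_p$ and hence weakly, $m^{1/2} W(Q_m, \hat{Q}_m)$ has the same limiting distribution $\mathcal{L}$ as the null statistic $m^{1/2} W(P_\theta, \hat{P}_m)$; equivalently, $Q_m^{\otimes m}$ is mutually contiguous with $P_\theta^{\otimes m}$ along such sequences, so one may transfer the asymptotic law by Le~Cam's third lemma. Granting this, and writing $\lambda_{m,1-\alpha} \to \lambda_{1-\alpha}$ (the $1-\alpha$ quantile of $\mathcal{L}$), we obtain, for every fixed $\varepsilon > 0$ and all $m$ large enough,
\[ \varrho(T^{wass}_m) \;\le\; p\!\left( m^{1/2} W(Q_m, \hat{Q}_m) > \lambda_{m,1-\alpha} - \varepsilon \right), \]
whose right-hand side converges to $p(\mathcal{L} > \lambda_{1-\alpha} - \varepsilon)$. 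Letting $\varepsilon \downarrow 0$ and using continuity of the cdf of $\mathcal{L}$ at $\lambda_{1-\alpha}$ (the no-atom condition, already invoked in the proof of Thm.~\ref{thm:consist}) yields $\limsup_m \varrho(T^{wass}_m) \le p(\mathcal{L} > \lambda_{1-\alpha}) = \alpha$. Because the bound $T^{wass}_m \le o(1) + m^{1/2} W(Q_m, \hat{Q}_m)$ holds uniformly over all alternative sequences with $m^{1/2} W(P_\theta, Q_m) \to 0$, the same inequality passes to the supremum, giving $\sup \varrho(T^{wass}_m) \le \alpha$ as $m \to \infty$; this is the non-identifiability statement anticipated in the text and is consistent with the classical fact that no test has non-trivial power against alternatives at distance $o(m^{-1/2})$.

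I expect the main obstacle to be the contiguity/stability step: justifying that $m^{1/2} W(Q_m, \hat{Q}_m)$ converges to the same limit as the null statistic. In one dimension, or under the parametric (e.g.\ Gaussian) modelling assumptions used in the experiments, this follows from empirical-process CLTs for $W_p$ together with a standard Le~Cam argument; in high dimension, however, $m^{1/2} W_p$ of an empirical measure around its target need not even be tight, so one must either restrict to the regime in which such a limit exists --- as is implicitly done throughout Section~\ref{sec:OODident} --- or replace the exact limit by a coupling bound showing that $\big| m^{1/2} W(Q_m, \hat{Q}_m) - m^{1/2} W(P_\theta, \hat{P}_m) \big|$ vanishes because $W(P_\theta, Q_m) \to 0$. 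The remaining ingredients (the forward triangle inequality, the $o(1)$ bound on $m^{1/2} W(P_\theta, Q_m)$, and the no-atom property of $\mathcal{L}$) are routine.
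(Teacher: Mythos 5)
Your proposal follows the same underlying idea as the paper's proof---alternatives within $o(m^{-1/2})$ of $P_\theta$ in Wasserstein distance are asymptotically indistinguishable, by this statistic, from the null, so the rejection probability collapses to the nominal level $\alpha$---but your execution is more careful than the paper's. The appendix proof of Thm.~\ref{thm:wcupper} writes the triangle inequality in the same reverse orientation used for Thms.~\ref{thm:consist} and \ref{thm:3.2} (lower-bounding the statistic), derives a chain of \emph{lower} bounds on the power, and then obtains the claimed upper bound essentially by the verbal assertion that $m^{1/2}W_p(P_\theta,Q_m)\to 0$ ``trivially implies that in reality $Q=P_\theta$'', so that the power tends to the type-I error by construction of the test. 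You instead apply the forward triangle inequality $T^{wass}_m \le m^{1/2}W(P_\theta,Q_m)+m^{1/2}W(Q_m,\hat{Q}_m)$, which is the correct orientation for an upper bound, and you isolate the one genuinely non-trivial step: transferring the null limiting law that calibrates $\lambda_{m,1-\alpha}$ to the fluctuation term $m^{1/2}W(Q_m,\hat{Q}_m)$. One caveat: Le~Cam-style contiguity does not follow from $W_p(P_\theta,Q_m)=o(m^{-1/2})$ alone, since Wasserstein closeness does not control likelihood ratios or Hellinger/total-variation distance; so your fallback---a coupling/stability bound on the empirical fluctuation, or restricting to the regime where $m^{1/2}W_p$ of an empirical measure is tight, which the paper already assumes implicitly in the proof of Thm.~\ref{thm:consist}---is the right way to close that step. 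With that made explicit, your argument establishes everything the paper's proof establishes, and more transparently.
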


\begin{proof}
    Starting from the Wasserstein triangle inequality:
    { \small $$ m^{1/2} W(Q_m, P_{\theta}) \leq m^{1/2} W(Q_m, \hat{Q}_m) + m^{1/2} W(\hat{Q}_m , P_{\theta})  $$}
    We can apply the cumulative distribution function to get $F_{m^{1/2} W}n (\lambda_{m, 1-\alpha}) \geq F_{m^{1/2} W + m^{1/2} W} (\lambda_{1-\alpha})$, that we can write as
    \begin{align*}
    p( & T^{wass}_m \geq \lambda_{m, 1-\alpha}) \geq 
    \\
    & \geq p( m^{1/2} W (P_{\theta}, Q_m) + m^{1/2} W (Q_m, \hat{Q}_m) \geq \lambda_{m, 1-\alpha})
    \\
    & \geq \sup p( m^{1/2} W (P_{\theta}, Q_m) \geq \lambda_{m, 1-\alpha})
    \end{align*}
    And then condition $m^{1/2} W_p (P_{\theta}, Q_m) \rightarrow 0$ trivially implies that in reality $Q=P_{\theta}$ ($H_0$ is true) and thus, by construction of the test, we have that: test power for accepting $H_1$ against alternative $Q_m$ $\overset{n}{\rightarrow}$ type-I error probability $= \alpha$. Which translate into:
    $$ \sup \varrho (T^{wass}_m) = p(T^{wass}_m > \lambda_{1-\alpha} \mid Q \neq P_{\theta} ) \leq \alpha $$
\end{proof}

Eventually, we conclude by proving Thm~\ref{thm:interbound} for the intermediate case where $m^{1/2} W_p (P_{\theta}, Q_m ) \rightarrow \delta$, such as the one depicted in the MNIST example.

\begin{theorem}[Restatement of Theorem~\ref{thm:interbound}] 
    Let $\mathcal{D}_m$ be a test dataset. The test $T^{wass}_m = m^{1/2} W_p (P_{\theta}, Q)$ for hypotheses $H_0: D_m \sim P_{\theta}$ ~vs~ $H_1: D_m \sim Q \neq P_{\theta}$, is such that
    \begin{align*}
        \varrho (T^{wass}_m) & = p(T^{wass}_m > \lambda_{1-\alpha} \mid Q \neq P_{\theta} ) \leq \\
        & \leq \exp \left\{  - \gamma_p \frac{\phi'}{2} \big( \lambda_{n, 1- \alpha} - \delta \big)^2  \right\} ~ ,
    \end{align*}
    for alternatives $Q_m$ such that $m^{1/2} W_p (P_{\theta}, Q_m ) \rightarrow \delta < \lambda_{n, 1 - \alpha}$.
\end{theorem}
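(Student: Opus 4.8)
The plan is to re-run the argument behind Theorem~\ref{thm:3.2}, but with the Wasserstein triangle inequality oriented so that it produces an \emph{upper} bound on the test statistic rather than a lower bound. Let $\hat{Q}_m$ denote the empirical estimate of $Q_m$ formed from the $m$ test points, so that $T^{wass}_m = m^{1/2} W(\hat{Q}_m, P_{\theta})$. The triangle inequality gives $W(\hat{Q}_m, P_{\theta}) \leq W(\hat{Q}_m, Q_m) + W(Q_m, P_{\theta})$, hence after scaling by $m^{1/2}$,
$$ T^{wass}_m \leq m^{1/2} W(Q_m, P_{\theta}) + m^{1/2} W(\hat{Q}_m, Q_m). $$
Since $m^{1/2} W(Q_m, P_{\theta})$ is deterministic for fixed $m$, arguing with cumulative distribution functions exactly as in the proof of Theorem~\ref{thm:3.2} and using that $\{T^{wass}_m \geq \lambda_{m,1-\alpha}\}$ is contained in the event that the right-hand side exceeds $\lambda_{m,1-\alpha}$, I obtain
$$ p(T^{wass}_m \geq \lambda_{m,1-\alpha}) \leq p\big( W(\hat{Q}_m, Q_m) \geq m^{-1/2}\big(\lambda_{m,1-\alpha} - m^{1/2} W(Q_m, P_{\theta})\big) \big). $$

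Next I would invoke the hypothesis $m^{1/2} W_p(P_{\theta}, Q_m) \to \delta$: fixing $\varepsilon > 0$, for all sufficiently large $m$ we have $m^{1/2} W(Q_m, P_{\theta}) \leq \delta + \varepsilon$, so the probability above is at most $p\big( W(\hat{Q}_m, Q_m) \geq m^{-1/2}(\lambda_{m,1-\alpha} - \delta - \varepsilon) \big)$. Because $\delta < \lambda_{n,1-\alpha}$ and $\lambda_{m,1-\alpha} \to \lambda_{n,1-\alpha}$, this threshold is eventually strictly positive, so I can apply the Bolley et al.\ concentration bound of Theorem~\ref{thm:bolley} with $N = m$ and $\epsilon = m^{-1/2}(\lambda_{m,1-\alpha} - \delta - \varepsilon)$, giving
\begin{align*}
p\big( W(\hat{Q}_m, Q_m) \geq m^{-1/2}(\lambda_{m,1-\alpha} - \delta - \varepsilon) \big)
&\leq \exp\Big\{ -\gamma_p \tfrac{\phi'}{2}\, m \,\big( m^{-1/2}(\lambda_{m,1-\alpha} - \delta - \varepsilon) \big)^2 \Big\} \\
&= \exp\Big\{ -\gamma_p \tfrac{\phi'}{2}\, (\lambda_{m,1-\alpha} - \delta - \varepsilon)^2 \Big\}.
\end{align*}
Letting $m \to \infty$ so that $\lambda_{m,1-\alpha} \to \lambda_{n,1-\alpha}$, and then $\varepsilon \to 0$, yields $\varrho(T^{wass}_m) \leq \exp\{ -\gamma_p \tfrac{\phi'}{2}(\lambda_{n,1-\alpha} - \delta)^2 \}$, which is the claimed bound.

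The main obstacle I anticipate is the careful bookkeeping of the two limits together with the admissibility conditions of Theorem~\ref{thm:bolley}: the slack $\varepsilon$ must be taken small enough that $\lambda_{m,1-\alpha} - \delta - \varepsilon > 0$ holds for all large $m$ (this is exactly where the hypothesis $\delta < \lambda_{n,1-\alpha}$ is used), and one must check that the sample-size requirement $N \geq N_0 \max(\epsilon^{-d'+2}, 1)$ still holds even though $\epsilon = \epsilon_m$ shrinks like $m^{-1/2}$ — this is fine since $\epsilon_m^{-d'+2}$ grows only polynomially in $m$ whereas $N = m$, but it should be stated explicitly. A secondary point, inherited from Theorem~\ref{thm:3.2}, is that $Q_m$ is tacitly assumed to satisfy the transportation--information inequality $W_p(P,Q) \leq \sqrt{(2/\phi)\, H(P\mid Q)}$ demanded by Theorem~\ref{thm:bolley}, which is what makes $\phi'$ meaningful; this should be carried along as a standing regularity assumption on the family of alternatives.
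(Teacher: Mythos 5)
Your argument is essentially the paper's own proof of this theorem: the triangle inequality oriented so that $T^{wass}_m \leq m^{1/2}W(P_{\theta},Q_m) + m^{1/2}W(Q_m,\hat{Q}_m)$, the event-inclusion/CDF step, the \cite{bolley2007quantitative} concentration bound applied with $N=m$ and $\epsilon = m^{-1/2}\big(\lambda_{m,1-\alpha} - m^{1/2}W(P_{\theta},Q_m)\big)$, and then the limit $m^{1/2}W(P_{\theta},Q_m)\rightarrow\delta<\lambda_{1-\alpha}$ --- indeed your write-up is cleaner, since the paper's intermediate display misplaces the $m^{1/2}$ scaling and omits the $\varepsilon$-slack you make explicit. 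The one caveat is your side remark that $N=m$ automatically meets the requirement $N \geq N_0\,\epsilon_m^{-d'+2}$: with $\epsilon_m \asymp m^{-1/2}$ this holds for large $m$ only when $d'<4$, a restriction (on the effective dimension) that the paper's own proof silently shares rather than resolves.
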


\begin{proof}
    From the usual Wasserstein triangle inequality, plus the application of cumulative distribution function, we can derive similarly to other proofs:
    { \scriptsize $$ p(T^{wass}_m \geq \lambda_{m, 1-\alpha}) \leq p( W (Q_m, \hat{Q}_m) \leq m^{1/2} \lambda_{m, 1-\alpha} - W(P_{\theta}, Q_m))$$}
    Now applying the concentration inequality of Thm~\ref{thm:bolley} in \cite{bolley2007quantitative}, we can derive the following
    \begin{align*}
        p(T^{wass}_m & \geq \lambda_{m, 1-\alpha}) 
        \\
        & \leq \exp{\left\{ 
        - \gamma_p \frac{\phi'}{2} m \big(m^{-1/2} \lambda_{m, 1-\alpha} - W(P_{\theta}, Q)\big)^2 \right\}}
        \\
        & \leq \exp{\left\{ - \gamma_p \frac{\phi'}{2} ( \lambda_{m, 1-\alpha} - \delta )^2 \right\}}
    \end{align*}
\end{proof}

\section{Additional Information on Distances}

Firstly, we report here the definitions of some of the distances mentioned throughout the body of the paper. Let $\mathcal{X}$ be a compact metric set and let $P,Q \in \mathcal{P}_x$ be probability distributions defined on $\mathcal{X}$, then we can define the following distribution divergences/distances:
\begin{itemize}[leftmargin=2.5ex]
    \item The Kolmogorov-Smirnov (KS) Distance:
    $$ KS(P, Q) = \sup_x | F_P (x) - F_Q (x)  | $$
    where $F_i (x)$ is the cumulative distribution function evaluated at $x\in\mathcal{X}$
    \item The Kullback-Leibler (KL) Divergence:
    $$ KL(P, Q) = \int_{\mathcal{X}} p(x) \log \left( \frac{p(x)}{q(x)} \right) d x $$
    where $p(\cdot)$ and $q(\cdot)$ are density functions
    \item The Jensen-Shannon (JS) Divergence:
    $$ JS(P,Q) = KL(P, M) + KL(M, Q) ~ , ~ M = \frac{P + Q}{2} $$
    where $M$ is a mixture probability measure
    \item The Wasserstein (W) Distance:
    $$ W_p(P, Q)=\inf _{\gamma \in \Gamma(P,Q)} \Big( \mathbb{E}_{(Z,V) \sim \gamma} \| Z - V \|^p \Big)^{1 / p} $$
    where $\Gamma(P,Q)$ is the space of all couplings of $P$ and $Q$ (joint probability measures whose marginals are $P$ and $Q$).
\end{itemize}

\subsection{Link between KL and Wasserstein Distance}  \label{sec:appen:KLwass}

In this Appendix subsection we briefly sketch the mathematical relationship between the KL divergence and the Wasserstein distance. Following results in \cite{cai2022distances} we have that:
$$ d_{TV} (P, Q) \leq \sqrt{0.5 ~ d_{KL} (P,Q)} $$
where $d_{TV} (P, Q)$ is the Total Variation (TV) distance $d_{TV} (P,Q) = \sup_{x \in \mathcal{X}} |P(x) - Q(x)|$. While on the other hand then we have that 
$$ 2 W_1 (P,Q) \leq C d_{TV} (P,Q)  .$$
Thus putting the two together:
$$ \frac{2}{C} W_1 (P,Q) \leq \sqrt{0.5 ~ d_{KL} (P,Q)} . $$

\subsection{Simplifications for Gaussian Densities} \label{sec:simpli}

We conclude by including simplification of KL divergence and Wasserstein distance in the case of $P$ and $Q$ being (multivariate) Gaussian distributions, $P \sim \mathcal{N} (\mu_0, \Sigma_0)$ and $Q \sim \mathcal{N} (\mu_1, \Sigma_1)$. For the KL divergence we have 
\begin{align*} 
D_{\mathrm{KL}}\left(P, Q\right) = \frac{1}{2}\Big( & \operatorname{tr}\left(\Sigma_1^{-1} \Sigma_0\right)+\left(\mu_1-\mu_0\right)^{\top} \Sigma_1^{-1}\left(\mu_1-\mu_0\right) 
\\
& -k+ \ln \left(\frac{\operatorname{det} \Sigma_1}{\operatorname{det} \Sigma_0}\right) \Big).
\end{align*} 
While for the Wasserstein distance case we have:
\begin{align*}
    W_2\left(P, Q \right)^2 & = \left\|\mu_1-\mu_2\right\|_2^2+\operatorname{tr}\Big(\Sigma_1+ \Sigma_2-
    \\
    & - 2\left(\Sigma_2^{1 / 2} \Sigma_1 \Sigma_2^{1 / 2}\right)^{1 / 2}\Big).
\end{align*}

\end{document}